\newcommand{\norm}[1]{\left\lVert#1\right\rVert}
\newtheorem{definition}{Definition}[]
\newtheorem{lemma}{Lemma}[]
\newtheorem{property}{Property}[]
\title{Learnable Spectral Wavelets on Dynamic Graphs to Capture Global Interactions}
\author{
    Anson Bastos,\textsuperscript{\rm 1}
   Abhishek Nadgeri,\textsuperscript{\rm 2}
    Kuldeep Singh,\textsuperscript{\rm 3}
       Toyotaro Suzumura,\textsuperscript{\rm 4} 
        Manish Singh\textsuperscript{\rm 1}
}
\begin{document}

\maketitle

\begin{abstract}
Learning on evolving(dynamic) graphs has caught the attention of researchers as static methods exhibit limited performance in this setting. The existing methods for dynamic graphs learn spatial features by local neighborhood aggregation, which essentially only captures the low pass signals and local interactions. In this work, we go beyond current approaches to incorporate global features for effectively learning representations of a dynamically evolving graph. 
We propose to do so by capturing the spectrum of the dynamic graph. Since static methods to learn the graph spectrum would not consider the history of the evolution of the spectrum as the graph evolves with time, we propose a novel approach to learn the graph wavelets to capture this evolving spectra.
Further, we propose a framework that integrates the dynamically captured spectra in the form of these learnable wavelets into spatial features for incorporating local and global interactions. Experiments on eight standard datasets show that our method significantly outperforms related methods on various tasks for dynamic graphs.
\end{abstract}

\section{Introduction}
\noindent Recently there has been tremendous progress in the domain of Graph Representation Learning \cite{khoshraftar2022survey}. The aim here is to develop novel methods to learn the features of graphs in a vector space. Such approaches have successful applications in the domain of image recognition \cite{han2022vision}, computational chemistry \cite{DBLP:conf/nips/YingCLZKHSL21}, drug discovery \cite{DBLP:journals/corr/abs-2202-05146} and Natural Language processing \cite{10.1145/3404835.3462809}. Although effective, the underlying graphs are static in nature. 

In many real-world scenarios, graphs are dynamic, for example, social networks, citation graphs, bank transactions, etc. For such cases, various approaches have been developed (see survey by \citet{kazemi2020representation}). Broadly, these methods aim to learn the evolving nature of graphs through spatial features relying on local neighborhood aggregation (local dependencies) \cite{evolvegcn,DBLP:journals/kbs/GoyalCC20}. For example, researchers \cite{evolvegcn,shi2021gaen} have resorted to using GNNs along with RNNs to capture the dynamic evolving nature of graphs. 
With initial successes, these methods are inherently limited to capturing local interactions, missing out on other important information. 
For example, in the case of a dynamic money transaction graph, having a skewed representation of fraud users among genuine users, there would exist links between these fraud and genuine users, thus, giving rise to high-frequency signals. In the case of local neighborhood aggregation (attending to low-frequency signals), the majority node(genuine) pattern will cause the fraudulent pattern to diminish. Thus, capturing global properties becomes necessary. Here, the global information would help identify the fraud pattern eventually assisting in identifying the criminal. 
Similarly, in the citation graph, the local properties will help to understand the category of paper. In contrast, global properties will help to understand the amount of interdisciplinary research across research domains. Hence, learning the global dependencies is crucial for dynamic graphs which is a relatively unexplored area in its scientific literature.

\begin{figure*}[ht!]
\centering
\includegraphics[width=0.88\linewidth]{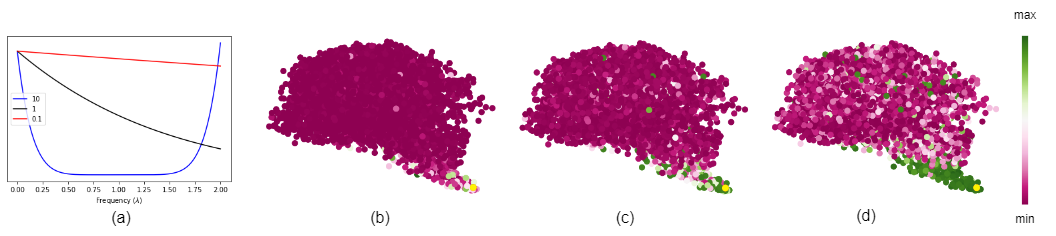}
\caption{For Brain dataset \cite{DBLP:conf/icdm/XuCLGLNZC019} at a given timestep, the figure (a) shows the learned filter functions at different scales by our proposed framework resembling band reject, low pass and all pass filters. The three diagrams to the right show the corresponding wavelets for a node (colored in yellow in figures (b,c,d) at lowest tip of the Brain) in the brain stem region. The graph nodes indicate the Regions of Interest(ROI) in brain. As scales change, the ROIs that the wavelet focuses on change from concentrated(local interactions) to diffused(global interactions). Moreover, the wavelets respect the brain structure and focus on the concerned region(brain stem in this case) thus mitigating noise due to interactions from unrelated regions.}
\label{fig:wavelets_motivation}
\vspace{-10pt}
\end{figure*}
In this paper we aim to encompass global dependencies in dynamic graphs, beyond local neighborhood aggregation, and view it through the lens of spectral graph theory \cite{hammond2019spectral}. 
For the same, we introduce a novel concept of \textit{learnable spectral graph wavelets} to capture global interactions of dynamic graph. The concept of learnable wavelet has following motivation: 
(i) Computing the spectra using full eigen decomposition is computationally expensive and requires $\mathcal{O}(N^3)$ time. Moreover, this gets even more computationally costly by a factor of the number of timesteps($T$) considered for dynamic graphs that evolve over every timestep. Spectral graph wavelets can be computed efficiently in $\mathcal{O}(N+|\mathcal{E}|)$ as we shall see in the following sections. (ii) Wavelets are sparser as compared to the graph Fourier transform adding to the computational benefits \cite{tremblay2014graph}. (iii) Wavelets give a sense of \textit{localization} in the \textit{vertex domain} of the graphs thus enabling interpretability of the convolutions while also being able to capture the global properties of the graph by changing the scale parameter (c.f., Figure \ref{fig:wavelets_motivation}). iv) Focusing on dynamic learnable wavelets helps in capturing global properties in that evolving spectra as the graph changes with time, where static and non-learnable wavelet methods \cite{xu2018graph} show empirical limitations. 

Furthermore, build on the recent success of capturing neighborhood features for evolving graphs, we propose to learn \textit{homogeneous representation} of spatial and spectral features to apprehend both local and global dependencies. 
Our approach is very intuitive as it keeps the proven local information intact whilst adding the global properties through learnable graph wavelet approach. Similar approaches have been proven in computer vision \cite{srinivas2021bottleneck} and NLP \cite{prasad2019glocal} where such restrictive inductive bias work well on learning local properties but miss out on global interactions. Also, the local aggregation leads to problems such as over smoothing and may not work well on heterophilic graphs \cite{wu2021representing}. Thus, using our methods will help to alleviate the drawbacks of the existing popular methods on dynamic graphs.

Our key contributions are two-fold: 1) a novel approach to \textit{learning spectral wavelets} on dynamic graphs for capturing global dependencies (with its theoretical foundations),
2) a novel framework named DEFT that combines spectral features obtained using learnable wavelets into spatial feature of the evolving graphs.
For effective use in downstream tasks, DEFT integrates the spatial and spectral features into \textit{homogeneous representations} which allows capturing shift invariance \cite{oppenheim1975digital} among the node features that could arise from the temporal nature.

\section{Related Works}
There has been considerable work on static graphs from the spatial and spectral domain perspective. Some works such as \cite{kipf2016gcn,hamilton2017inductive,velivckovic2018graph} focus on effectively learning spatial properties. Similarly, efforts such as \cite{levie2018cayleynets,balcilar2020analyzing,bastos2022how} have looked at graphs from a spectral perspective. GWNN \cite{xu2018graph}  has proposed to use spectral graph wavelets on static graphs. However, it obtains static wavelets using heat kernel-based filters.
Unlike GWNN, we learn spectral graph wavelets for dynamically evolving graphs. We observe in the experiment section that learnable wavelets perform significantly better for dynamic graphs compared to static wavelet methods.

One straight-forward way to use methods developed for static graphs on evolving dynamic graphs is to use RNN modules such as GRU and LSTM in addition to GNN modules to capture the evolving graph dynamism. This idea has been explored in works such as \cite{DBLP:journals/corr/SeoDVB16,narayan2018learning,DBLP:journals/pr/ManessiRM20}.
However, these models suffer a performance drop if new nodes are introduced in the graph; the GNNs may not be able to adapt to the evolving graph structure. Thus, EvolveGCN \cite{evolvegcn} has introduced to use RNNs to learn the parameters of the evolving GNN. The GNN, a GCN \cite{kipf2016gcn} in this case, is thus used ahead of the RNN module that captures the graph dynamism and offers promising results on dynamic graphs. EvolveGCN has limitation that it generalizes to unseen nodes in future timesteps, which is not always the case in real-world scenarios. \citet{xu2019gin} proposed similar approach by learning parameters of a GAT with an RNN that focus on graph topology discrepancies. 

Furthermore, autoencoder-based methods have been introduced, such as in \cite{DBLP:journals/corr/abs-1805-11273,DBLP:journals/kbs/GoyalCC20,xu2022dyng2g} that focus on reconstructing the graphs in future timesteps as an objective.
\citet{ledg} proposed a meta-learning framework in which the objective is to predict the graph at future timestep. However, these works focus only on learning spatial properties to capture local dependencies and ignores the global dependencies that may emerge due to dynamic nature of the graph.

\section{Preliminaries}
Consider a graph with vertices and edges as $(\mathcal{V},\mathcal{E})$ and adjacency matrix $A$. The laplacian($L=D-A$) can be decomposed into its orthogonal basis, namely the eigenvectors($U$) and eigenvalues($\Lambda$) as:$L = U \Lambda U^{*}$.
Let $X \in R^{N \times d}$ be the signal on the nodes of the graph. The Fourier Transform $\hat{X}$ of $X$ is then given as: $ \hat{X} = U^{*} X$. 

Spectral graph wavelet transforms \cite{hammond2011wavelets}  are obtained by functions of the laplacian $L$. 
Consider a linear self-adjoint operator($g(L)$) that acts on each component in the spectral space. We define a parameter $s$ for the scaling in the spectral space. The spectral graph wavelet at any given vertex $n$ is defined as the impulse($\delta_n$) response of the wavelet operator at vertex $n$: 
\[\psi_{s,n}(m) = \sum_{k=1}^{N} g(s\lambda_k) U_k^{*}(n) U_k(m) \]
The n-th wavelet coefficients at scale $s$ represented as $W_f(s,n)$ can be obtained by taking the inner product of the function $f$ in the \textit{vertex domain} with these wavelets as  
\begin{equation}\label{wavelet_coeff_prelim}
W_f(s,n) = \left< \psi_{s,n}, f \right> = \sum_{k=1}^{N} g(s\lambda_k) \hat{f}(k) U_k(m) 
\end{equation}

In our work, we propose to learn the wavelet coefficients for dynamic graphs, where the exact form of the scaling function $g(s \lambda_k)$ is parameterized.

\begin{figure*}[ht]
\centering
\centering
\subfigure[]{ 
\centering
  \includegraphics[width=0.60\linewidth]{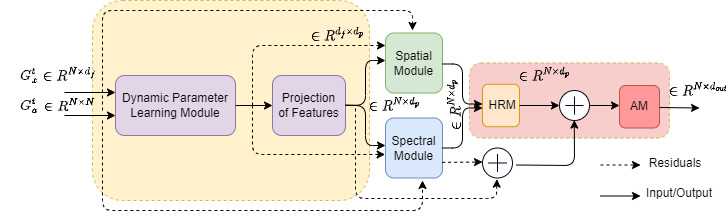}
}
\subfigure[]{ 
\centering
  \includegraphics[width=0.30\linewidth]{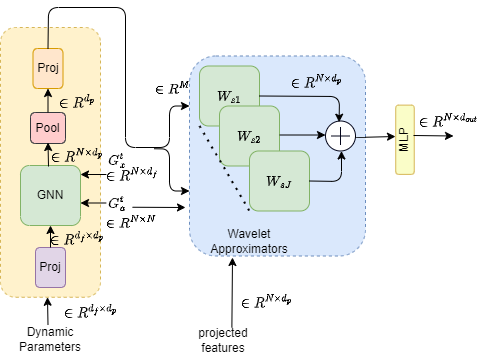}
}

\caption{Figure (a) shows the DEFT architecture. The yellow outer box shows the modules that learn the weight parameters of the GNNs in an evolving manner. These parameters are then given to the spectral and spatial modules to learn the corresponding features. The learned features then propagates to the \textit{homogeneous representation} module (HRM) followed by the aggregation module(AM) which together forms the integration module (red outer box) of our framework. Figure (b) explains the components of the spectral module. Similar to image (a), the yellow outer box learns the filter functions for the wavelet approximators.}
\label{fig:overall_architecture}
\vspace{-4pt}
\end{figure*}

\section{Method}
Figure \ref{fig:overall_architecture} illustrates the proposed DEFT (Dynamic wavElets For global inTeractions) framework that comprises the following modules: 1) Spectral component: focuses on global features of the graph in the form of learnable spectral graph wavelets. 
2) Spatial component: the necessity of this component is to mitigate the unsmooth spectrum phenomenon i.e. the node features gets correlated with eigenvectors corresponding to high frequencies causing information loss \cite{yang2022new}. To resolve this, spatial module focus on the local neighborhood of the graph in addition to that captured by the wavelets.
3) Integration module: Finally, for a \textit{homogeneous representation} of the global properties captured by the spectral component and the local properties learned by the spatial component, we propose a module that learns invariant representations and use these in an aggregation module for downstream tasks.

\subsection{Spectral Module}
We aim to capture global features without requiring the full eigen decomposition of the laplacian. Here, we propose to approximate the wavelet operator using some function.
We use the Chebyshev polynomials to be consistent with the literature \cite{hammond2011wavelets}. It is known that for any function $h$ with domain $y \in [-1,1]$ there exists a convergent Chebyshev series:
\begin{equation*}
    h(y) = \sum_{k=0}^{\infty} c_{k} T_k(x)
\end{equation*}
with the Chebyshev polynomials satisfying the recurrence relation $T_k(y)=2yT_{k-1}(y)-T_{k-2}(y), T_0=1, T_1=y$ and the coefficients $c_{k}$ are given by:
\begin{equation*}
    c_{k} = \frac{2}{\pi} \int_{-1}^{1} \frac{T_k(y)h(y)}{\sqrt{1-y^2}} dy = \frac{2}{\pi} \int_{0}^{\pi} cos(k\theta) h(cos(\theta)) d\theta
\end{equation*}
In order to approximate the function $g(s_j x)$ for each scale $j$, we need to bring the domain $x$ in $[-1,1]$. Noting that $0 \leq x \leq \lambda_{max}$ for the laplacian $L$, we perform the transform $y=\frac{x-a}{a}, a=\frac{\lambda_{max}}{2}$. We now define $\overline{T_k}(x) = T_k(y) = T_k(\frac{x-a}{a})$ and the approximation for $g$ looks as below
\begin{equation}\label{eq_chebyshev_approx}
    g(s_j x) = \sum_{k=0}^{\infty} c_{j,k} \overline{T_k}(x)
\end{equation}
with the coefficients given by,
\[c_{j,k} = \frac{2}{\pi} \int_{0}^{\pi} cos(k\theta) g(s_j (a(cos(\theta)+1))) d\theta\]
We truncate the polynomial to $M$ terms, which is the filter order. The coefficients $c_{j,k}$ which are analytical coefficients of the filter function as desired are approximated using functions parameterized by GNNs and MLPs, as we shall see next. $f_c^s$ is the parameterized form of it, obtained in spectral module at scale s. 
A GNN is used to perform message passing over the input graph along with the node features $v$ at layer $l$ for neighborhood $\mathcal{N}$. 
\begin{equation*}
    v_{im}^{l} = A_f({v_j^l | v_j \in \mathcal{N}(v_i)}), \ \ \ \ v_{i}^{l+1} = U_f(v_{i}^{l},v_{im}^{l})
\end{equation*}
Here, $A_f, U_f$ are the aggregation and update functions, respectively. The update function could contain a non-linearity such as leaky ReLU for better expressivity of the learned function. Since we intend to learn the filter coefficients $f_c \in R^M$ for the concerned graph($G$), we apply a pooling layer to get an intermediate vector representation($v_G \in R^{d_1}$) from the output of the GNN. The pooling layer converts a set of vectors(one for each node of the graph) to a single vector representation. For the final filter coefficients $f_c$, we apply a two-layer MLP with activation($\sigma$) to $v_g$
\begin{equation}
    f_c = W_2 \ \ \sigma \left( W_1 v_g \right) \\
\end{equation}
where $W_1 \in R^{d_2 \times d_1}, W_2 \in R^{d_2 \times M}$ are learnable weights. Since the two-layer MLP is a universal approximator \cite{hornik1991approximation} we can be assured of the existence of a function in this space that learns the desired mapping to the filter coefficients. In principle, any message passing GNN can be used to perform the update and aggregation steps. This process can be repeated with multiple GNNs for learning multiple filter functions.
As we consider dynamic graphs, we would like to evolve the parameters of the GNN with time (Dynamic Parameter learning Module of Figure \ref{fig:overall_architecture}, common for both spectral and spatial components). Inspired from \cite{evolvegcn}, we use an RNN module for generating the parameters for the GNN in layer $l$ at time $t$:
\begin{equation}
    W_t^l = RNN( H_{t}^{l}, W_{t-1}^{l} )
\end{equation}
where $W_t^l$ and $H_{t}^{l}$ are the hidden state and input at layer $l$ and time $t$ of the RNN.
In the below pseudo code, we outline our method to evolve the Spectral Module($ESpectral$) for dynamically learning filter coefficients per timestep 
\begin{algorithmic}[1]
  \small
  \baselineskip=15pt
  \Function {$f_{c_t} = ESpectral$}{$A_t, H_t^{(l)}, W_{t-1}^{(l)}$}
  \State $W_t^{(l)} = RNN(W_{t-1}^{(l)})$
  \State $H_t^{(l+1)} = GNN(A_t, H_t^{(l)}, W_t^{(l)})$
  \State $v_{g_t} = Pool(A_t, H_t^{(l+1)})$
  \State $f_{c_t} = W_2 \ \ \sigma \left( W_1 v_{g_t} \right)$
  \EndFunction
\end{algorithmic}
After learning GNN parameters, we need to learn filter coefficients for evolving graph. Learning the filter coefficients $f_c^s$ at a given timestep, we can obtain the wavelet operator $g(L)$ at scale $s=1$ using equation \ref{eq_chebyshev_approx}. For operators at a given scale $s > 0$, we could learn different parameterizations of the filter function at every scale. 
Note that approximating the functions at different scales in this manner would incur a storage cost of $\mathcal{O}(JN)$ for the filter coefficients. Along with this the storage and computation complexity would increase $J$ times for dynamically computing the filter coefficients from the GNNs. 
Thus we propose a ``rescaling trick'' wherein to obtain the operator at any scale $s > 0$, we perform the appropriate change of variables in equation \ref{eq_chebyshev_approx} to get $g(sL)$, keeping the coefficients $c_{j,k}$ fixed. That is for a scale $s$, instead of mapping $\lambda \xrightarrow[]{} g(\lambda)$ it would be mapped to $\lambda \xrightarrow[]{} g(s \lambda)$. 
It helps maintain parameter efficiency(by a factor of $J$) as the GNN weights(and also filter coefficients) are tied across all the scales.  Note here the exact filter learned would vary since the maximum frequency is the same. Hence, a bandpass at scale "one" may get converted to a highpass at scale "two".
Once we approximate $g(\lambda)$, we use it in learning the wavelet coefficients (output of spectral module) $W_f(s,n)$ as described in equation \ref{wavelet_coeff_prelim}.

Now, we give an approximation bound between the learned and desired filter function under the given framework for dynamic graphs with $N$ nodes and varying edges and signals. Please see appendix for all proofs.
\begin{lemma}\label{lemma_spectral1}
Consider $G^t(\lambda)$ to be the filter function at time $t$. Assume the Markov property to hold in the dynamic setting where the desired filter function($G^{t+1}(\Lambda)$) at time $t+1$ depends on the past state at time $t$($G^t(\Lambda)$). Consider this mapping between the past state and the current states to be captured by an arbitrary functional $f$ such that $G^{t+1}(\lambda) = f(G^t(\lambda_1), G^t(\lambda_2), \dots G^t(\lambda_N), \lambda)$ and we assume $f$ to be $L$ lipschitz continuous. Further, let $C_t=U_t G^t(\lambda) U_t^T \in R^{N \times N}$ represents the convolution support of the desired spectral filter and $C_t^a$ be the learnt convolution support at time $t$. Then, we have:

(i) $\norm{C_{t+1}^a - C_{t+1}}_F \leq LN^2 \sqrt{\norm{ C_{t}^a - C_{t} }_F^2 + \epsilon_{ca}^2} + \epsilon_{fa}$

(ii) $\norm{ C_{t+1}^a - C_{t}^a }_F \leq \norm{ C_{t+1} - C_{t} }_F + 2 \sqrt{N} \epsilon_{ca}$

\noindent where $\epsilon_{ca}$ and $\epsilon_{fa}$ are the filter polynomial(Chebyshev) and function approximation errors and depends on the order of the polynomial, number of training samples, model size etc.
\end{lemma}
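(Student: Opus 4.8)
\noindent The plan is to reduce both bounds to the per-eigenvalue filter error and then propagate that error through the Lipschitz functional $f$. The key structural observation is that, at each time step, both the desired convolution support $C_t$ and the learned one $C_t^a$ are filters acting on the \emph{same} graph Laplacian $L_t = U_t \Lambda_t U_t^T$, so they are simultaneously diagonalized by $U_t$. Hence $C_t - C_t^a = U_t\big(G^t(\Lambda_t) - G^{t,a}(\Lambda_t)\big)U_t^T$, and by orthogonal invariance of the Frobenius norm the matrix error reduces to the Euclidean error of the per-eigenvalue filter evaluations, $\norm{C_t - C_t^a}_F = \big(\sum_{k}|G^t(\lambda_k)-G^{t,a}(\lambda_k)|^2\big)^{1/2}$. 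I would isolate the two sources of learned-filter error up front: the Chebyshev truncation error $\epsilon_{ca}$ incurred when a filter function is represented by an $M$-term series (Eq.~\ref{eq_chebyshev_approx}), and the function-approximation error $\epsilon_{fa}$ incurred when the GNN/MLP pipeline replaces the true evolution functional $f$ by a learned $\tilde f$, whose existence is guaranteed by the universal-approximation property cited earlier.

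For part (i), the plan is a single triangle-inequality decomposition that inserts, as a pivot, the \emph{true} functional $f$ applied to the \emph{learned} (Chebyshev-represented) previous state. This splits $|G^{t+1}(\lambda_k)-G^{t+1,a}(\lambda_k)|$ into (a) $|f(\text{true state},\lambda_k)-f(\text{learned state},\lambda_k)|$ and (b) $|f(\text{learned state},\lambda_k)-\tilde f(\text{learned state},\lambda_k)|$. Term (b) is bounded by $\epsilon_{fa}$. For term (a) I would invoke the $L$-Lipschitz continuity of $f$ to bound it by $L$ times the norm of the input perturbation; that perturbation has two contributions, the propagated state error $\norm{C_t^a-C_t}_F$ and the fresh Chebyshev error $\epsilon_{ca}$ in representing the previous state, which, treated as distinct blocks of the input vector, combine in quadrature to give $L\sqrt{\norm{C_t^a-C_t}_F^2+\epsilon_{ca}^2}$. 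Reassembling the matrix-level bound then requires conjugating back by $U_{t+1}$ and summing over the $N$ eigenvalues; carrying these through with the (loose) submultiplicative Frobenius bounds $\norm{U_{t+1}}_F=\norm{U_{t+1}^T}_F=\sqrt{N}$ together with the $N$-term aggregation produces the factor $N^2$, yielding (i).

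For part (ii), I would avoid the recursion entirely and use only the triangle inequality with $C_{t+1}$ and $C_t$ as pivots: $\norm{C_{t+1}^a-C_t^a}_F \le \norm{C_{t+1}^a-C_{t+1}}_F+\norm{C_{t+1}-C_t}_F+\norm{C_t-C_t^a}_F$. Each learned-versus-desired single-step term is bounded by the Chebyshev error alone: per eigenvalue the truncation error is at most $\epsilon_{ca}$, so by orthogonal invariance and summation over the $N$ eigenvalues $\norm{C_\tau^a-C_\tau}_F \le \sqrt{N}\,\epsilon_{ca}$ for $\tau\in\{t,t+1\}$. Substituting these two terms gives the additive $2\sqrt{N}\,\epsilon_{ca}$ and hence (ii).

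The routine ingredients are the triangle inequality, orthogonal invariance, and the Lipschitz estimate. The part I expect to be delicate is the error bookkeeping: justifying that the propagated state error and the input-side Chebyshev error combine in quadrature under $f$ (rather than merely additively), and tracking the dimensional factors faithfully through the orthogonal conjugations and the matrix-to-eigenvalue-vector conversions so that the constants land exactly at $N^2$ in (i) and $2\sqrt{N}$ in (ii). Replacing the Frobenius submultiplicativity by its spectral-norm counterpart would sharpen the $N$ dependence, but since only an upper bound is required the looser estimates suffice.
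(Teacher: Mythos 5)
Your proposal follows essentially the same route as the paper's proof: for part (i), the identical decomposition of $\norm{C_{t+1}^a-C_{t+1}}_F$ into a Lipschitz-propagated previous-step error plus a universal-approximation error $\epsilon_{fa}$, with the loose Frobenius bounds $\norm{U_{t+1}}_F=\sqrt{N}$ supplying the $N^2$ factor, and for part (ii), the same triangle inequality where each per-step learned-versus-desired term is charged only the Chebyshev error, giving $2\sqrt{N}\epsilon_{ca}$. The only divergence is bookkeeping inside the square root of (i) — the paper produces the quadrature form $\sqrt{\norm{C_t^a-C_t}_F^2+\epsilon_{ca}^2}$ from the cross terms of an expanded squared sum of per-eigenvalue errors rather than from your orthogonal-block argument — which is immaterial at the paper's level of rigor, since the paper's own derivation likewise matches the stated bound only up to constant factors (it ends with $2\epsilon_{ca}^2$ under the root).
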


Above result gives us a relation between the error at times $t$ and $t+1$ and has a factor of $N^2L$. Thus it requires the filter function to be smooth($L < \frac{1}{N^2}$) for convergence because under the given Markov assumptions, the past errors could accumulate in future timesteps. This is precisely why we need a gated model like GRU/LSTM in the Dynamic Parameter Generation module that can decouple the approximation of the filter function at a given timestep from the error in previous timesteps.
\subsection{Spatial Module}
\citet{yang2022new} concluded that high-frequency components of the signal on the graph get strengthened with a high correlation with each other, and the smooth signals become weak (unsmooth spectrum) i.e., the cosine similarity between the transformed signal and the low eigenvector reduces with the layers. In our setting, we illustrate that the factor by which the signal corresponding to the low-frequency component gets weakened is directly proportional to the magnitude of the frequency response at that frequency. 
\begin{lemma}
Let $G(\lambda)$ be the frequency response at frequency $\lambda$. Let $\lambda_1 \geq \lambda_2 \geq \dots \geq \lambda_n$ be the eigenvalues in descending order and $p_1, p_2, \dots p_n$ be the corresponding eigenvectors of the laplacian of the graph. Define $\lambda_{max}$ to be the eigenvalue at which $G(\lambda)$ is maximum. Let $C^{l}$ represent the convolution support($UG(\Lambda)U^T$) of the spectral filter at layer $l$. Then the factor by which the cosine similarity between consecutive layers dampens is $\underset{l \xrightarrow[]{} \infty}{lim} \frac{| cos(\left<C^{l+1}h, p_n\right>) |}{| cos(\left<C^{l}h, p_n\right> |} = \frac{G(\lambda_{n})}{G(\lambda_{max})}$.
\end{lemma}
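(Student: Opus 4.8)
The plan is to reduce everything to eigenbasis coordinates and then extract the asymptotics of repeated filtering. Since the Laplacian is symmetric, $U$ is orthogonal and the eigenvectors $p_1,\dots,p_n$ form an orthonormal basis. Because the same filter is applied at every layer, the convolution support after $l$ layers is the $l$-th power $C^l = (U G(\Lambda) U^T)^l = U G(\Lambda)^l U^T$. First I would expand the input signal as $h = \sum_k a_k p_k$ with $a_k = \langle h, p_k\rangle$, so that $C^l h = \sum_k G(\lambda_k)^l a_k p_k$ and, by orthonormality, $\langle C^l h, p_n\rangle = G(\lambda_n)^l a_n$ while $\norm{C^l h}^2 = \sum_k G(\lambda_k)^{2l} a_k^2$. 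This turns the cosine similarity into the explicit expression $\cos(\langle C^l h, p_n\rangle) = G(\lambda_n)^l a_n / \norm{C^l h}$ (using $\norm{p_n}=1$).

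The second step is to form the consecutive-layer ratio. The $a_n$ factor and one power of $G(\lambda_n)$ cancel, leaving
\[\frac{\cos(\langle C^{l+1}h, p_n\rangle)}{\cos(\langle C^l h, p_n\rangle)} = G(\lambda_n)\,\frac{\norm{C^l h}}{\norm{C^{l+1}h}}.\]
Thus the entire problem collapses to evaluating $\lim_{l\to\infty}\norm{C^l h}/\norm{C^{l+1}h}$.

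The third step is the asymptotic analysis of this norm ratio, which is where the maximizing frequency enters. Writing $G_{\max}=G(\lambda_{max})$ and factoring $G_{\max}^{2l}$ out of both sums, each summand carries the factor $(G(\lambda_k)/G_{\max})^{2l}$, which tends to $0$ for every $\lambda_k$ with $|G(\lambda_k)| < G_{\max}$ and to $1$ at the maximizer. Hence the numerator sum tends to $a_{max}^2$ and the denominator sum to $G_{\max}^2 a_{max}^2$, giving $\norm{C^l h}/\norm{C^{l+1}h}\to 1/G_{\max}$. Combining with the previous step yields the claimed limit $G(\lambda_n)/G(\lambda_{max})$, and taking absolute values matches the statement exactly.

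I expect the genuine obstacle to be the clean justification of the norm-ratio limit rather than the algebra. It implicitly requires that the component of $h$ along the maximizing eigenvector is nonzero ($a_{max}\neq 0$), that the maximum of $G$ over the actual spectrum is attained and dominant (strictly exceeds the other responses in magnitude), and that $a_n, G(\lambda_n)\neq 0$ so the ratio of cosines is well defined for large $l$. I would state these as mild nondegeneracy assumptions on the signal and filter, and remark that a tie at the maximum (a multi-dimensional maximizing eigenspace) does not change the conclusion, since the surviving mass is still rescaled by exactly $G_{\max}$ between consecutive layers, so the limiting ratio is unaffected.
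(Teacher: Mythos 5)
Your proposal is correct and follows essentially the same route as the paper's proof: express the cosine similarity in eigenbasis coordinates, cancel the common factors in the consecutive-layer ratio, and evaluate the limit by factoring out the dominant term $G(\lambda_{max})^{2l}$. The only differences are cosmetic and favorable to you — you derive the closed-form cosine expression directly (the paper cites it from prior work), and you make explicit the nondegeneracy assumptions ($\alpha_{max}\neq 0$, strict dominance of the maximizing frequency, $\alpha_n, G(\lambda_n)\neq 0$) that the paper leaves implicit.
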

If $\lambda_{max}$ belongs to one of the higher regions of the spectrum and $G(\lambda_n)$ is lower, then as the layers increase, the signals will lose the low-frequency information. 
Since the Spectral Module aims to capture the high-frequency components that may lead to an unsmooth spectrum, to resolve this, we explicitly strengthen the low-frequency components by using local neighborhood aggregation.
For this, we inherit the message passing GNNs, the parameters of which are generated using the RNN module inspired from \cite{evolvegcn}. The below pseudo code outlines the process to evolve the spatial module($ESpatial$).
\begin{algorithmic}[1]
  \small
  \baselineskip=15pt
  \Function {$[H_t^{(l+1)}, W_t^{(l)}] = ESpatial$}{$A_t, H_t^{(l)}, W_{t-1}^{(l)}$}
  \State $W_t^{(l)} = RNN(W_{t-1}^{(l)})$
  \State $H_t^{(l+1)} = GNN(A_t, H_t^{(l)}, W_t^{(l)})$
  \EndFunction
\end{algorithmic}

\subsection{Integration Module}
\textbf{Homogeneous representation Module(HRM)}: aims to achieve homogeneous representations from the spatial and spectral properties along with time features, which is essential for its usage in downstream tasks. 
A straightforward way is by concatenating two representations. However, due to the dynamic nature of the graph that evolves with time, we propose a learnable module that provides a notion of distance between the representations and helps induce position/structure information. Due to graph dynamism, it is important for the features to satisfy the shift invariance property:
\begin{definition}[Shift Invariance \cite{li2021learnable}]
Any two vectors $v_i = f_1(v_i^{'})$ and $v_j = f_1(v_j^{'})$ satisfy the shift invariance property if the inner product $\left< v_i, v_j \right>$ is a function of $v_i^{'}-v_j^{'}$ i.e. $\left< v_i, v_j \right> = f_2(v_i^{'}-v_j^{'})$.
\end{definition}
\noindent $f_1$ is an arbitrary function and $f_2$ is a linear transformation of $f_1$. Above property ensures that the relative distance between two nodes is maintained in the inner product space even if their absolute positions change(for example with addition of new nodes in the graph with time). Inspired from Fourier features  \cite{rahimi2007random,rahimi2008weighted}, for a node $i$, the spectral($v_{gi} \in R^{d_g}$) and spatial($v_{li} \in R^{d_l}$) embeddings are concatenated($\|$) along with the timestamp($t \in R^{d_t}$) information if available. Then, it is passed to an MLP.
\[v_{gli} = MLP(v_{gi} \| v_{li} \| t)\]
In order to obtain Fourier features($v_{ffi}$) from above intermediate representation, we take the element-wise sine,cosine and concatenate the two as:
\begin{align*}
    v_{ffi} = sin(v_{gli}) \| cos(v_{gli})
\end{align*}
\begin{property}\label{si_property}
The vector $v$ obtained by concatenation of the element wise sine and cosine of another vector $v^{'}$ i.e. $v = (sin(v^{'}) \| cos(v^{'}))$, satisfies the shift invariance property.
\end{property}
The property \ref{si_property} can be readily noted by observing that $cos(a-b) = cos(a)cos(b)+sin(a)sin(b)$. Taking the inner product($\left<.\right>$) of the above features for two nodes($i,j$) gives 
\begin{equation*}
    \small
    \left< v_{ffi}, v_{ffj} \right> = \sum cos(W_r(v_{gli} - v_{glj})) = ff_{W_r}(v_{gli} - v_{glj})
\end{equation*}
Property 1 is beneficial if these representations are used in an attention-based model such as \cite{vaswani_2017_attention,velivckovic2018graph} as we get a notion of closeness(similarity) in the embedding space.
The final \textit{homogeneous representation} for node $i$ ($v_{hri}$) is obtained as:
\begin{align*}
    v_{hri} = W_{hr2} \ \ \sigma (W_{hr1}(sin(v_{gli}) \| cos(v_{gli})))
\end{align*}
where $W_{hr2} \in R^{d \times d}, W_{hr1} \in R^{d \times (d_g+d_l+d_t)}$ are learnable weights and $\sigma$ is the activation function. 

\noindent \textbf{Aggregation(AM)}: Once we achieve homogeneous representations for the features, we can use these in the downstream task by applying a layer of MLP.
However, we also perform another level of aggregation to learn effective representations. While, in principle, we could use any of the existing message passing frameworks for this aggregation, we adopt a sparse variant of the attention mechanism inspired by \cite{vaswani_2017_attention} for computational benefits. 
Specifically, consider $X \in R^{N \times d_f}$ to be the node feature learned from the spatial and spectral modules. Now we define for the $l$-th layer, $W^{l}_{Q}, W^{l}_{K} \in R^{d_{out} \times d_f}$ to be the learnable weight matrices for the query and key of the self attention respectively. We apply self attention on the transformed features followed by softmax to get the aggregation weights $w_{ij}^l = softmax(\sum_{d_k} \hat{w_{ij}^{l}})$ between nodes $i,j$, where $ \hat{w_{ij}^{l}} = \frac{W^{l}_{Q} X[i]^T \odot W^{l}_{K} X[j]^T}{d_{out}}$ if nodes $i$ and $j$ are connected in the graph and 0 otherwise.
\subsection{Overall complexity}
The complexity of spectral module is $\mathcal{O}(|E| + N\sum_{j=0}^{J} M_j)$(cf., appendix) where $M_j$ is the order of the polynomial of the $j$-th filter head. The spatial modules can compute the features in a $\mathcal{O}(|E| + N)$ complexity. The integration module further has two components: HRM and AM. The HRM module would have a computational complexity of $\mathcal{O}(N)$ whereas for the AM, it depends on the underlying aggregator. In our choices, it would be an $\mathcal{O}(|E| + N)$ complexity. Thus the overall computational complexity comes to $\mathcal{O}(3|E| + N(3+\sum_{j=0}^{J} M_j) ) = \mathcal{O}(|E| + N)$, for bounded degree graphs further reduces to $\mathcal{O}(N)$ for a given snapshot at time $t$.

\section{Experiments} \label{sec:experiments}
Now, we present comprehensive experiments to evaluate our proposed framework. 
We borrow datasets and its preprocessing/splitting settings used in previous best baselines \cite{ledg,evolvegcn} with an average of five runs for final values. Code will be released.

\noindent \textbf{Datasets}:
Table \ref{tab:dataset} summarizes eight datasets for link prediction, edge classification, and node classification.
Each dataset contains a sequence of time-ordered graphs. SBM is a synthetic dataset to simulate evolving community structures. BC-OTC is a who-trusts-whom transaction network where the node represents users, and the edges represent the ratings that a member gives others in a range of -10(maximum distrust) to +10(maximum trust). BC-Alpha is similar to BC-OTC, albeit the transactions are on a different bitcoin network. UCI dataset is a student community network where nodes represent the students, and the edges represent the messages exchanged between them. AS dataset summarizes a temporal communication network indicating traffic flow between routers. In the Reddit dataset, nodes are the source and target subreddits, and the edges represent the sentiment's polarity between users.
The Elliptic dataset represents licit vs illicit transactions on the elliptic network of bitcoin transactions. The nodes represent transactions and edges represent payment flows. 
Finally, for Brain dataset, nodes represent tiny regions/cubes in the brain, and the edges are their connectivity. 

\begin{table}[ht] 
\small
\vspace{-24pt}
\centering
\caption{Data set statistics and details. In the task column, LP refers to Link Prediction, EC refers to Edge Classification and NC is Node Classification.}
\label{tab:dataset}
\vspace{-5pt}
\begin{tabular}{ccccc}
\hline
& \# Nodes & \# Edges & \# Time Steps & Task\\
&          &          & (Train / Val / Test) &  \\
\hline

BC-OTC   & 5,881   & 35,588  & 95 / 14 / 28 & EC\\
BC-Alpha & 3,777   & 24,173  & 95 / 13 / 28 & EC\\
Reddit   & 55,863  & 858,490 & 122 / 18 / 34 & EC\\
SBM      & 1,000   & 4,870,863 & 35 / 5 / 10 & LP \\
UCI      & 1,899   & 59,835  & 62 / 9 / 17 & LP\\
AS       & 6,474   & 13,895  & 70 / 10 / 20 & LP\\ 
Elliptic & 203,769 & 234,355 & 31 / 5 / 13 & NC\\
Brain   & 5,000  & 1,955,488 & 10 / 1 / 1 & NC\\
\hline
\vspace{-20pt}
\end{tabular}
\end{table}

\noindent \textbf{Baselines}:
In order to show the efficacy of our framework, we compare with the following competitive baselines: (1) \textit{Static graph methods}: GCN \cite{kipf2016gcn}, GAT \cite{velivckovic2018graph}. (2) \textit{Dynamic graph methods} such as GCN-GRU \cite{evolvegcn}, DynGEM \cite{DBLP:journals/corr/abs-1805-11273}, GAEN \cite{shi2021gaen}, the variants of  dyngraph2vec \cite{DBLP:journals/kbs/GoyalCC20}: dyngraph2vecAE(v1) and dyngraph2vecAERNN (v2), the best performing two variants of EvolveGCN \cite{evolvegcn}: EvolveGCN-O and EvolveGCN-H, and the two variants of LEDG \cite{ledg}: LEDGCN and LEDGAT.

\noindent \textbf{Task and Experiment Settings}:
For the link prediction task, we report mean reciprocal rank(MRR) and mean average precision(MAP) as in the baselines. For MRR, the existing links shall be ranked higher than non existing links. For each of the given nodes $u,v$, our model gives embeddings in the final layer $h_u^t, h_v^t$ respectively. We predict the link between these nodes by concatenating the vectors for the corresponding nodes. Following the standard practices, we perform negative sampling and optimize the cross entropy loss function. 
For edge classification, our model classifies the edges between two nodes($u,v$) by concatenating the vectors for the corresponding nodes($h_u^t, h_v^t$) at time $t$ and we report the micro F1 score. For node classification similar to the baselines, our model reports micro F1 (Brain dataset) or minority class F1 (Elliptic dataset) by classifying given node nodes($u$) as belonging to a certain class at time $t$. 

We perform a search for the best model from a set of  hyperparameters the range of which are as follows: Number of layers $\in \{1,2\}$, Hidden dimension $\in \{32,64,128\}$, Number of heads $\in \{4,8,16\}$, Filter order $\in \{4,8,16\}$, 
Wavelet scales $\in \left[ 0.1,10 \right]$. 
The rest of the parameters and settings are borrowed from previous works \cite{evolvegcn,ledg}.  All our experiments run on a single Tesla P100 GPU. The code will be public post-acceptance.

\noindent \textbf{Variants of our Framework}: In the main results we show three variants of our method with different aggregators. We use the MLP(DEFT-MLP), GAT(DEFT-GAT) and sparse Transformer(DEFT-T) as aggregators. The rationale behind selecting GAT and MLP in addition to the proposed transformer aggregator is to contrast it with a less expressive static attention mechanism in GAT and a control in MLP. 

\begin{table}[ht] \small
    \vspace{-24pt}
	\centering
	\caption{Link prediction results where mean average precision (MAP) and mean reciprocal rank (MRR) are displayed. Best values are bold, second bests are underlined.}\vspace{-3pt}
	\scalebox{0.8}{
	\begin{tabular}{c|cc|cc|cc}
	\hline
	\multirow{2}{*}{}Datasets&
	\multicolumn{2}{c|}{SBM}&
	\multicolumn{2}{c|}{UCI}&
	\multicolumn{2}{c}{AS}\cr\cline{2-7}
	Metrics&MAP&MRR&MAP&MRR&MAP&MRR
	\cr
	\hline
	GCN  & 0.1894 & 0.0136 & 0.0001 &  0.0468 & 0.0019 & 0.1814 \cr
	GAT  & 0.1751 & 0.0128 & 0.0001 &  0.0468 & 0.0200 & 0.1390 \cr
	DynGEM & 0.1680 & 0.0139 & 0.0209 &  0.1055 & 0.0529 & 0.1028 \cr
	GCN-GRU  & 0.1898 & 0.0119 & 0.0114& 0.0985 & 0.0713 & 0.3388\cr
	dyngraph2vec \tiny{V1} & 0.0983 & 0.0079 & 0.0044 &  0.0540 & 0.0331 & 0.0698 \cr
	dyngraph2vec \tiny{V2} & 0.1593 & 0.0120 & 0.0205 &  0.0713 & 0.0711 & 0.0493  \cr
    GAEN & 0.1828	& 0.0078 & 0.0001 &0.0490 & 0.1304 & 0.0507 \cr
	EvolveGCN-H & 0.1947 & 0.0141 & 0.0126 &  0.0899 & 0.1534 & 0.3632\cr
	EvolveGCN-O  & \underline{0.1989} & 0.0138 & 0.0270 & 0.1379 & 0.1139 & 0.2746\cr
	LEDG-GCN & 0.1960 & \underline{0.0147} & 0.0324   & 0.1626 & 0.1932 & \underline{0.4694}\cr
	LEDG-GAT &0.1822  & 0.0123     & 0.0261 & 0.1492  & \underline{0.2329} & 0.3835       \cr
	\hline
	\textbf{DEFT-MLP\tiny{(ours)}} &0.1658  & 0.0124     & \textbf{0.0543} & \underline{0.1715}  & 0.1737	&0.3569      \cr 
	\textbf{DEFT-GAT\tiny{(ours)}} &0.0966  & 0.0081     & \underline{0.0502} & 0.1702  & 0.0308 & 0.0928      \cr
	\textbf{DEFT-T\tiny{(ours)}} &\textbf{0.2421} & \textbf{0.0220} & 0.0501   & \textbf{0.2007} & \textbf{0.5879} & \textbf{0.6471}\cr
	\hline
	\end{tabular}}
	\vspace{-8pt}
	\label{tab:performance_lp}
\end{table}

\begin{figure}[ht]
\centering
\includegraphics[width=0.89\linewidth]{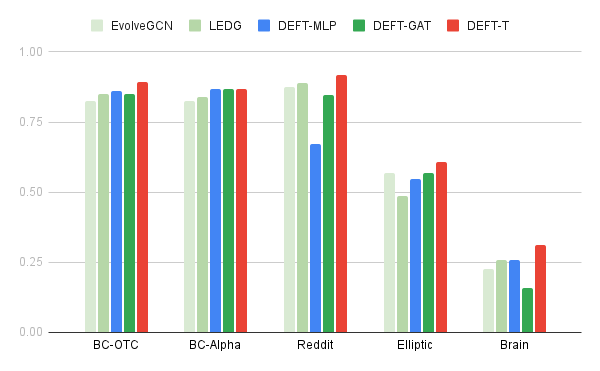}
\caption{Performance of edge classification and node classification, with F1 scores on the y-axis. For BC-OTC, BC-Alpha, and Reddit (edge classification) and Brain(node classification) the F1 score is the micro average. For Elliptic dataset(node classification), the F1 is the minority F1 because of the class imbalance and interest in the minority class (illicit transactions).}
\label{fig:edge_classification}
\vspace{-20pt}
\end{figure}

\section{Results}

\subsection{Results on Link Prediction}
Table \ref{tab:performance_lp} summarizes link prediction results. The key results are: 1) 
across datasets, our model (DEFT-T) significantly outperform all baselines which focus on learning local dependencies (in static and dynamic graph settings). It illustrates our framework's effectiveness in learning homogeneous representations of local and global dependencies in evolving graphs. 2) Interestingly, a simple MLP aggregator (DEFT-MLP) on the learned spectral and spatial properties already achieves better results than most of the baselines on UCI and AS datasets (including ones for dynamic graphs). 3) The transformer variant (DEFT-T) gives consistently better results than the GAT and MLP variants indicating the aggregation using the sparse self-attention of transformers is essential for the framework on these datasets. 

\subsection{Results on Edge Classification}
The results of edge classification on the three datasets (BC-OTC, BC-Alpha, Reddit) are given in Figure \ref{fig:edge_classification} and show the micro averaged F1 score. We observe from the results that at least one variant of our method outperforms the best baselines variants: EvolveGCN and LEDG. Our MLP-based configuration achieves better results than baselines on BC-OTC and BC-Alpha, whereas DEFT-T consistently outperforms the baselines and other DEFT variants. This further verifies the efficacy of the model developed under the proposed framework.

\subsection{Results on Node Classification}
Figure \ref{fig:edge_classification} shows the results of node classification on Brain(homophily ratio \cite{DBLP:conf/nips/ZhuYZHAK20}: 26\%) and Elliptic(homophily ratio: 96\%) datasets. 
DEFT would consider the brain structure on the Brain dataset while capturing global interactions. DEFT has a substantial performance gain on the Brain dataset, which has a high heterophily ratio requiring capturing global interactions in graphs compared to existing models that only perform local aggregation. 
On the homophilic Elliptic dataset, DEFT also performs good by capturing low-frequency signals. It confirms that when the spatial and spectral properties are combined using an optimal aggregator to learn the local and global dependencies, the model results in robust performance across tasks.

\subsection{Ablation Studies}

\begin{table}[ht] 
	\centering
	\caption{Table shows the link prediction results for the static wavelet and transformer architectures along with ablations of individual components of our model.}\vspace{-3pt}
	\scalebox{0.8}{
	\begin{tabular}{c|cc|cc}
	\hline
	\multirow{2}{*}{}Datasets&
	\multicolumn{2}{c|}{SBM}&
	\multicolumn{2}{c}{UCI}\cr\cline{2-5}
	Metrics&MAP&MRR&MAP&MRR
	\cr
	\hline
	GWNN  & 0.1789 & 0.0121 & 0.0076 & 0.0820\cr
	Transformer & 0.2052 & 0.0174 & 0.0308 &  0.1441  \cr
	GT & 0.2166 & 0.01805 & 0.0310 &  0.1414  \cr
	SAN & 0.2143 & 0.0180 & 0.0388 &  0.1822   \cr
	\hline
	DEFT-woSpectral  & 0.2123 & 0.0171 & 0.0460 &  0.1764 \cr
	DEFT-woSpatial  & 0.2083 & 0.0171 & 0.0421& 0.1806\cr
	DEFT-woHRM  & 0.2410 & 0.0217 & 0.0429& 0.1588\cr
	DEFT-staticSpectral & 0.2297 & 0.0198 & 0.0419 &  0.1506 \cr
	\hline
	\textbf{DEFT-T\tiny{(best configuration)}} &0.2421 & 0.0220 & 0.0501   & 0.2007 \cr
	\hline
	\end{tabular}}
	\label{tab:performance_ablation}
\end{table}

\textbf{Model Ablation Study:}
To study the effect of each module of DEFT, we systematically remove modules and create several sub-configuration of our best performing model (DEFT-T): (i) \textbf{DEFT-woSpectral} does not have the spectral module (ii) \textbf{DEFT-woSpatial} does not have the spatial module (iii) \textbf{DEFT-woHRM} with the Homogeneous Representation Module removed(the spatial and spectral features are simply added), 
(iv) \textbf{DEFT-staticSpectral} has the spectral wavelets that are learnable
but do not evolve with time and in a graph-specific manner. We also compare our best model with recent transformer-based models used for static graphs such as transformer \cite{vaswani_2017_attention}, SAN \cite{san2021}, GT \cite{dwivedi2020generalization}, and static wavelet baseline GWNN \cite{xu2018graph}.

The ablation study (Table \ref{tab:performance_ablation}) provides several key insights to understand proposed DEFT framework. Firstly, the lesser values reported by static (non-learnable) wavelet GNN (i.e., GWNN) and the static (but learnable) variant of our model (i.e., DEFT-staticSpectral) conclude that learning dynamic wavelets benefits in the case of evolving graphs. Secondly, the significant drop in performance when the spatial or spectral module is removed provides an essential finding that for dynamic graphs, these modules learn useful and orthogonal representations. When these modules are combined using an effectively learned homogeneous representation, the model (DEFT-T) provides significantly superior performance than its variants. Lastly, the static transformer models consistently report lesser values than our model.

\begin{figure}[t]
	\centering
	\includegraphics[width=0.35\textwidth]{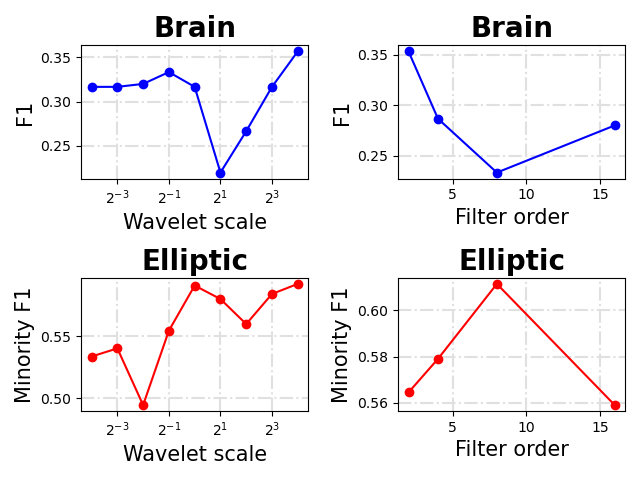} \vspace{-3.5pt}
    \caption{Effect of the parameters wavelet scale and filter order on the performance of the model.}
	\label{parameter_ws_fo}
\end{figure}

\textbf{Filter parameter selection:}
In spectral module, there are two parameters to be tuned: filter(polynomial) order and the wavelet scale. In this study, we try to understand how selection of parameters affects model performance.
From Figure \ref{parameter_ws_fo} illustrating results on Brain and Elliptic datasets, we make the following observations: 
(1) \textit{Wavelet scales}: In Brain dataset, the result dips with higher scale and then increases. For Elliptic, a general increasing trend is observed. 
In the case of the learnt filters for the given graphs, it seems that as the scales decrease the aggregation range of the spectral GNN increases(as in figure \ref{fig:wavelets_motivation}). 
Thus, brain dataset being heterophilic seems to benefit from scales in the lower region. Larger scales may work well for the homophilic Elliptic dataset as the aggregation focuses on a relatively smaller neighborhood, while also capturing the heterophily signals in the minority illicit transactions.
(2) \textit{Filter order}: On Brain dataset, there is a decreasing trend with increasing filter order, whereas, for Elliptic, an optimal result is attained in the middle region. 
This could be because the Brain dataset being small needs simpler filters(such as single mode band reject etc.); with higher filter order, it may have overfit. 
The larger Elliptic dataset, on the other hand, benefits from a larger filter order. Despite being homophilic for the majority nodes, it may need complex filters to handle the minority nodes due to the peculiar and global nature of illicit transactions. Thus, it benefits from relatively higher filter orders. 
These observations conclude that the parameters of the order of polynomial and the wavelet scales are dataset dependent.

\section{Conclusion}
Our work proposes a concept of \textit{learnable spectral graph wavelets} to capture global dependencies by omitting the need for full eigen decomposition of dynamic graphs. Furthermore, we implement it in the DEFT framework that integrates the graph's spectral and spatial properties for dynamic graphs. 
From the results on a wide range of tasks for dynamic graphs, we infer that the proposed method is able to capture the local(short range) and global(extended range) properties effectively. 
Also, it can capture a broad range of interactions respecting the graph structure, thus avoiding noise while being sparse(computationally efficient). Future works could build upon frameworks for joint time and graph Fourier transform for dynamic graphs to generate wavelets and study if these could be used for graph sampling to provide computational efficiency on extremely large graphs.

\section{Acknowledgments}
This work was partly supported by JSPS KAKENHI Grant Number JP21K21280. We thank anonymous reviewers for constructive feedback.

\bibliography{main}
\section{Supplementary Material}
In this section we outline the proofs for the theoretical results stated in the main paper. For completion we give the statement along with the proofs.

\begin{lemma}\label{lemma_spectral1}
Consider $G^t(\lambda)$ to be the filter function at time $t$. Assume the Markov property to hold in the dynamic setting where the desired filter function($G^{t+1}(\Lambda)$) at time $t+1$ depends on the past state at time $t$($G^t(\Lambda)$). Consider this mapping between the past state and the current states to be captured by an arbitrary functional $f$ such that $G^{t+1}(\lambda) = f(G^t(\lambda_1), G^t(\lambda_2), \dots G^t(\lambda_N), \lambda)$ and we assume $f$ to be $L$ lipschitz continuous. Further, let $C_t=U_t G(\lambda)^t U_t^T \in R^{N \times N}$ represent the convolution support of the desired spectral filter and $C_t^a$ be the learnt convolution support at time $t$. Then, we have:

(i) $\norm{C_{t+1}^a - C_{t+1}}_F \leq LN^2 \sqrt{\norm{ C_{t}^a - C_{t} }_F^2 + \epsilon_{ca}^2} + \epsilon_{fa}$

(ii) $\norm{ C_{t+1}^a - C_{t}^a }_F \leq \norm{ C_{t+1} - C_{t} }_F + 2 \sqrt{N} \epsilon_{ca}$

where $\epsilon_{ca}$ and $\epsilon_{fa}$ are the filter polynomial(Chebyshev) and function approximation errors and depends on the order of the polynomial, number of training samples, model size etc.
\end{lemma}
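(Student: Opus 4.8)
The plan is to prove both bounds via triangle-inequality decompositions that isolate the three sources of discrepancy between the learnt support $C_t^a$ and the desired support $C_t$: the Chebyshev truncation error, the GNN/MLP function-approximation error, and the error propagated from the previous timestep through the Lipschitz functional $f$. The two workhorses throughout will be (a) unitary invariance of the Frobenius norm, $\norm{U M U^T}_F = \norm{M}_F$ for orthogonal $U$, which lets me pass freely between a convolution support $C = U\,\diag(G(\lambda_1),\dots,G(\lambda_N))\,U^T$ and the vector of spectral responses $(G(\lambda_k))_k$; and (b) the standard Chebyshev sup-norm truncation bound, which I read as $\max_k \lvert \hat G^t(\lambda_k) - G^t(\lambda_k)\rvert \le \epsilon_{ca}$, so that conversion to a Frobenius norm over the $N$ eigenvalues costs a factor $\sqrt{N}$, i.e. $\norm{C_t^a - C_t}_F \le \sqrt{N}\,\epsilon_{ca}$ when only Chebyshev error is present.

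For part (i), first I would introduce the matrix $B_{t+1}$ obtained by applying the \emph{exact} functional $f$ to the \emph{learnt} previous spectral responses $(\hat G^t(\lambda_k))_k$ and re-embedding in $U_{t+1}$, so that $C_{t+1}^a - C_{t+1} = (C_{t+1}^a - B_{t+1}) + (B_{t+1} - C_{t+1})$. The first summand is precisely the failure of the parameterized GNN/MLP to realise $f$, which I bound by the function-approximation error $\epsilon_{fa}$; this is the additive term sitting outside the square root. The second summand compares $f$ evaluated at the learnt inputs against $f$ evaluated at the desired inputs; here I would invoke $L$-Lipschitz continuity of $f$ over its $N$ spectral arguments and apply Cauchy--Schwarz to move from the per-component bound to the Euclidean norm of the input-error vector. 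The input error itself has two contributions --- the propagated state error $\norm{C_t^a - C_t}_F$ and the Chebyshev error incurred when the learnt responses are represented by the truncated polynomial --- which combine additively in the squared Euclidean norm, producing the factor $\sqrt{\norm{C_t^a - C_t}_F^2 + \epsilon_{ca}^2}$. Converting back from the $N$ output responses to the Frobenius norm of $C_{t+1}$ via unitary invariance, together with the Cauchy--Schwarz passage over the $N$ arguments of $f$, is what accumulates the dimension factor into $N^2 L$.

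Part (ii) is the easier of the two and I would handle it by a three-term triangle inequality, $\norm{C_{t+1}^a - C_t^a}_F \le \norm{C_{t+1}^a - C_{t+1}}_F + \norm{C_{t+1} - C_t}_F + \norm{C_t - C_t^a}_F$, and then bounding the two outer approximation terms. Isolating the Chebyshev contribution at each timestep, each of $\norm{C_{t+1}^a - C_{t+1}}_F$ and $\norm{C_t - C_t^a}_F$ is at most $\sqrt{N}\,\epsilon_{ca}$ by the sup-norm-to-Frobenius conversion above, and summing the two gives the stated $2\sqrt{N}\,\epsilon_{ca}$ additive slack around the desired temporal increment $\norm{C_{t+1} - C_t}_F$.

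I expect the main obstacle to be part (i), and specifically the careful bookkeeping of the dimension factors that yields $N^2 L$: the Lipschitz functional acts on all $N$ spectral inputs and produces all $N$ spectral outputs, so passing between the $\ell_2$ norm of spectral-response differences and the Frobenius norms of the convolution supports invokes Cauchy--Schwarz (and crude norm-equivalence bounds of the form $\norm{\cdot}_F \le N\norm{\cdot}_{\max}$) more than once, and one must ensure the Chebyshev error $\epsilon_{ca}$ enters \emph{inside} the Lipschitz-scaled square root while the function-approximation error $\epsilon_{fa}$ remains \emph{outside}. Verifying that these two errors attach at the correct stage of the recursion --- Chebyshev as part of the input fed into $f$, function approximation as the final output discrepancy --- is the delicate structural point, since a careless ordering would misplace the $L$ and $N^2$ factors.
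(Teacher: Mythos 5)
Your proposal is correct and follows essentially the same route as the paper's own proof: the same triangle-inequality decomposition that places the functional-approximation error $\epsilon_{fa}$ additively outside (your $C_{t+1}^a - B_{t+1}$ term is exactly the paper's $f^a_t$-versus-$f_t$ discrepancy), the same Lipschitz propagation of the previous timestep's error plus Chebyshev error into the square-root term for part (i), and the same Chebyshev-only per-timestep slack of $\sqrt{N}\,\epsilon_{ca}$ around $\norm{C_{t+1}-C_t}_F$ for part (ii). The only cosmetic differences are that you invoke Lipschitz continuity directly where the paper takes a first-order Taylor expansion and bounds the gradient by $L$, and your unitary-invariance bookkeeping is actually tighter than the paper's submultiplicative bounds $\norm{U}_F\norm{\cdot}_F\norm{U^T}_F$ (which is where its $N^2$ factor comes from), so your argument still implies the stated, weaker bound.
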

\begin{proof}
(i)
The approximate and desired convolution support matrix $C_{t+1}^a$ and $C_{t+1}$ at time $t$ of the spectral filter has the decomposition as:
\begin{align*}
    C_{t+1}^a &= U_{t+1} G^a_{t+1}(\Lambda) U_{t+1}^T \\
    C_{t+1} &= U_{t+1} G_{t+1}(\Lambda) U_{t+1}^T
\end{align*}
where $U_{t+1} \in R^{N \times N}$ is the matrix whose columns contain the eigenvectors of the graph laplacian at time $t+1$.
Now the Frobenius norms of these two matrices is given by,
\begin{align*}
    &\norm{ C_{t+1}^a - C_{t+1} } \\
    &= \norm { U_{t+1} G^a_{t+1}(\Lambda) U_{t+1}^T -  U_{t+1} G_{t+1}(\Lambda) U_{t+1}^T} \\
    &= \norm { U_{t+1} (G^a_{t+1}(\Lambda) - G_{t+1}(\Lambda)) U_{t+1}^T} \\
\end{align*}
As the graph evolves at time $t$, let $f_t$ represent the function that maps the desired spectra at time $t$ to the one at time $t+1$ i.e. $G_{t+1}(\lambda_i) = f_{t}(\|_{i \in [N]} G^t(\lambda_i), \lambda_i), \forall i \in [N]$. Let $f_{t}^a$ represent a similar mapping from $G_{t}^{a}(\Lambda)$ to $G_{t+1}^{a}(\Lambda)$. Using the approximation result of multi-layer neural networks in \cite{Barron1991ApproximationAE}, we have $f_{t}^{a}(\|_{i \in [N]} G^t(\lambda_i), \lambda_i ) - f_{t}(\|_{i \in [N]} G^t(\lambda_i), \lambda_i ) = \mathcal{O}(\frac{C_f^2}{h} + \frac{hd}{N}log(N)) = \epsilon_1$ where $C_f$ is the first absolute moment of the distribution of the Fourier magnitudes of $f_{t}^{a}$, $h$ the number of hidden units in the network, $d$ is the input dimension and $N_s$ the number of training samples, $\|$ is the concatenation operator.
Thus we have 
\begin{equation}
    f_{t}^{a}( \|_{i \in [N]} G^t(\lambda_i), \lambda) = f_{t}(\|_{i \in [N]} G^t(\lambda_i), \lambda) + \epsilon_1
\end{equation}
where $\epsilon_1 = \mathcal{O}(\frac{C_f^2}{h} + \frac{hd}{N}log(N))$.

Now, from the above equations we upper bound the norm(Frobenius unless mentioned otherwise) as,
\small{
\begin{align*}
    &\norm{ C_{t+1}^a - C_{t+1} } = \norm { U_{t+1} (G^a_{t+1}(\Lambda) - G_{t+1}(\Lambda)) U_{t+1}^T} \\
    &= \norm { U_{t+1} (f^a_{t}(G^a_t(\Lambda), \Lambda) - f_{t}(G_t(\Lambda), \Lambda)) U_{t+1}^T} \\
    &= \norm { U_{t+1} (f^a_{t}(G^a_t(\Lambda), \Lambda) + \epsilon_1 I_N - f_{t}(G_t(\Lambda), \Lambda)) U_{t+1}^T} \\
    &= \norm { U_{t+1} (f^a_{t}(G^a_t(\Lambda), \Lambda) - f_{t}(G_t(\Lambda), \Lambda)) U_{t+1}^T +  U_{t+1} (\epsilon_1 I_N U_{t+1}^T)} \\
    &\leq \norm { U_{t+1} (f^a_{t}(G^a_t(\Lambda), \Lambda) - f_{t}(G_t(\Lambda), \Lambda)) U_{t+1}^T} +  \norm{ \epsilon_1 U_{t+1} I_N U_{t+1}^T} \\
    &= \norm { U_{t+1} (f^a_{t}(G^a_t(\Lambda), \Lambda) - f_{t}(G_t(\Lambda), \Lambda)) U_{t+1}^T} +  \norm{ \epsilon_1 U_{t+1} U_{t+1}^T} \\
    &= \norm { U_{t+1} (f^a_{t}(G^a_t(\Lambda), \Lambda) - f_{t}(G_t(\Lambda), \Lambda)) U_{t+1}^T} + \epsilon_1  \norm{ I_N} \\
    &= \norm { U_{t+1} (f^a_{t}(G^a_t(\Lambda), \Lambda) - f_{t}(G_t(\Lambda), \Lambda)) U_{t+1}^T} + \sqrt{N}\epsilon_1 \\
    &= \norm { U_{t+1} (f^a_{t}(G^a_t(\Lambda), \Lambda) - f_{t}(G_t(\Lambda), \Lambda)) U_{t+1}^T} + \epsilon_{fa} \\
\end{align*}
}
where $\epsilon_{fa} = \sqrt{N}\mathcal{O}(\frac{C_f^2}{h} + \frac{hd}{N_s}log(N_s))$ and $f^a_{t}, f_{t}$ are applied element wise over the domain($\Lambda$ being discrete).
Assuming, $G^a_{t}(\Lambda) \approx G_{t}(\Lambda)$(which holds true asymptotically) we perform a first order Taylor series expansion of the function $f_{t}(G^a_t(\Lambda), \Lambda)$ at $(G^a_t(\Lambda), \Lambda) = \|_{i \in [N]} G^t(\lambda_i), \lambda_i$ as below
\small{
\begin{align*}
    f_{t}(G^a_t(\Lambda), \Lambda) &\approx f_{t}(G_t(\Lambda), \Lambda) + \left< (G^a_t(\Lambda)-G_t(\Lambda),0), f^{'}_{t}(G_t(\Lambda), \Lambda) \right> \\
\end{align*}
}
where $f^{'}_{t}(G_t(\Lambda), \Lambda) \in R^{N}$ is the vector of partial derivatives along each basis. Thus in the limit that $G^a_t(\Lambda) \xrightarrow[]{} G_t(\Lambda)$,
\begin{align*}
    &\underset{G^a_t(\Lambda) \xrightarrow[]{} G_t(\Lambda)}{lim} f_{t}(G^a_t(\Lambda), \Lambda) - f_{t}(G_t(\Lambda), \Lambda) \\
    &= \left< (G^a_t(\Lambda)-G_t(\Lambda),0), f^{'}_{t}(G_t(\Lambda), \Lambda) \right> \\
    &\therefore \norm{ f_{t}(G^a_t(\Lambda), \Lambda) - f_{t}(G_t(\Lambda), \Lambda) } \\
    &= \norm{\left< (G^a_t(\Lambda)-G_t(\Lambda),0), f^{'}_{t}(G_t(\Lambda), \Lambda) \right>} \\
    &\leq \norm{\left< (G^a_t(\Lambda)-G_t(\Lambda),0), L \mathbf{1} \right>} \\
    &\leq L \norm{ \sum_{i \in [N]} G^a_t(\lambda_i)-G_t(\lambda_i) } \\
\end{align*}
Now since $G^a_t(\lambda_i)$ is obtained from a truncated polynomial(Chebyshev) approximation which is parameterized using a (universal) function approximator, we would have error terms involving the polynomial as well as function approximation. From \citep{bastos2022how} which proved such bounds for the static case we have,

\[G^a_t(\lambda_i)-G_t(\lambda_i)  = \mathcal{O}(\frac{KC_f^2}{h} + \frac{hK^2}{N_s}log(N_s) + K^{-m}) \]
Here it is assumed that the input dimension is $\mathcal{O}(K)$ which is the filter order and the desired filter response $G(\lambda)$ has $m+1$ continuous derivatives on the domain $[-1,1]$. We set $\mathcal{O}(\frac{KC_f^2}{h} + \frac{hK^2}{N_s}log(N_s) + K^{-m}) = \epsilon_{ca}$ for notational simplicity.
Now we have that, 
\begin{align*}
    &\norm{ \sum_{i \in [N]} G^a_t(\lambda_i)-G_t(\lambda_i) }^2 = \norm{G^a_t(\Lambda)-G_t(\Lambda)}^2 \\
    &+ 2 \sum_{i \in [N], i \neq j} (G^a_t(\lambda_i)-G_t(\lambda_i))(G^a_t(\lambda_j)-G_t(\lambda_j)) \\
    &= \norm{G^a_t(\Lambda)-G_t(\Lambda)}^2 + 2 N^2 \epsilon_{ca}^2 \\
    &\norm{ \sum_{i \in [N]} G^a_t(\lambda_i)-G_t(\lambda_i) } = \sqrt{ \norm{G^a_t(\Lambda)-G_t(\Lambda)}^2 + 2 N^2 \epsilon_{ca}^2}
\end{align*}
From the above equation we have,
\begin{align*}
    &\norm{f_{t}(G^a_t(\Lambda), \Lambda) - f_{t}(G_t(\Lambda), \Lambda)} \leq L \norm{ \sum_{i \in [N]} G^a_t(\lambda_i)-G_t(\lambda_i) } \\
    &= L \sqrt{ \norm{G^a_t(\Lambda)-G_t(\Lambda)}^2 + 2 N^2 \epsilon_{ca}^2} \\
    &=  L \sqrt{ \norm{ U_t^T U_t (G^a_t(\Lambda)-G_t(\Lambda)) U_t^T U_t }^2 + 2 N^2 \epsilon_{ca}^2} \\
    &=  L \sqrt{ \norm{ U_t^T (U_t G^a_t(\Lambda) U_t^T - U_t G_t(\Lambda) U_t^T) U_t }^2 + 2 N^2 \epsilon_{ca}^2} \\
    &\leq  L \sqrt{ \norm{ U_t^T }^2 \norm{(U_t G^a_t(\Lambda) U_t^T - U_t G_t(\Lambda) U_t^T)}^2 \norm{ U_t }^2 + 2 N^2 \epsilon_{ca}^2} \\
    &\leq  L \sqrt{ N^2 \norm{(U_t G^a_t(\Lambda) U_t^T - U_t G_t(\Lambda) U_t^T)}^2 + 2 N^2 \epsilon_{ca}^2} \\
    &=  L \sqrt{ N^2 \norm{C^a_t - C_t}^2 + 2 N^2 \epsilon_{ca}^2} \\
    &=  LN \sqrt{ \norm{C^a_t - C_t}^2 + 2 \epsilon_{ca}^2} \\
\end{align*}
Using this and the expression for $\norm{ C_{t+1}^a - C_{t+1} }$ we have,
\begin{align*}
    &\norm{ C_{t+1}^a - C_{t+1} } \leq \norm { U_{t+1} (f^a_{t}(G^a_t(\Lambda), \Lambda) - f_{t}(G_t(\Lambda), \Lambda)) U_{t+1}^T} + \epsilon \\
    &\leq \norm{U_{t+1}} \norm{(f^a_{t}(G^a_t(\Lambda), \Lambda) - f_{t}(G_t(\Lambda), \Lambda))} \norm{U_{t+1}^T} \\
    &\leq L N^2 \sqrt{ \norm{C^a_t - C_t}^2 + 2 \epsilon_{ca}^2} + \epsilon_{fa} 
\end{align*}
This completes the first part of the proof.

\noindent (ii) For the second part we have to upper bound $\norm{C^a_{t+1} - C^a_t}$. We have,
\begin{align*}
    &\norm{C^a_{t+1} - C^a_t} = \norm{U_{t+1} G^a_{t+1}(\Lambda) U_{t+1} - U_t G^a_t(\Lambda) U_t^T} \\
    &= \norm{U_{t+1} (G_{t+1}+\epsilon_{ca}I_N)(\Lambda) U_{t+1} - U_t (G_t(\Lambda)+\epsilon_{ca}I_N) U_t^T} \\
    &\leq \norm{U_{t+1} G_{t+1} U_{t+1}^T - U_t G_t(\Lambda) U_t^T} + 2\epsilon_{ca}\norm{I_N} \\
    &= \norm{C_{t+1}-C_t} + 2 \sqrt{N} \epsilon_{ca}
\end{align*}
where $\epsilon_{ca}$ in the second step is the same as for the part (i). This completes the proof.

\end{proof}
Part (i) in the above result gives us a relation between the error at times $t$ and $t+1$. In the expression $LN^2 \sqrt{\norm{ C_{t}^a - C_{t} }_F^2 + \epsilon_{ca}^2} + \epsilon_{fa}$ if the error of approximating $C_t$ if larger than the chebyshev polynomial approximation error i.e. $\norm{ C_{t}^a - C_{t} }_F >> \epsilon_{ca}$, then we have,
\begin{align*}
    \norm{ C_{t+1}^a - C_{t+1} }_F &\leq LN^2 \sqrt{\norm{ C_{t}^a - C_{t} }_F^2 + \epsilon_{ca}^2} + \epsilon_{fa} \\
    &\approx LN^2 \sqrt{\norm{ C_{t}^a - C_{t} }_F^2} + \epsilon_{fa} \\
    &= LN^2 \norm{ C_{t}^a - C_{t} }_F + \epsilon_{fa} \\
    &= (LN^2)^{t} \norm{ C_{0}^a - C_{0} }_F + \frac{\epsilon_{fa}}{1-LN^2}
\end{align*}
The last equation follows if $L < \frac{1}{N^2}$ in the limit of $t \xrightarrow[]{} \infty$. Thus convergence is guaranteed if the filter function is smooth. This is also one of the reasons why we initialise the filter coeffcients with a one vector(all pass filter). However this may be too strict a constraint in practice, and in the cases where the initial approximation error is large and/or the filter function is complex with high gradients the approximation error of the convolution support may diverge with time. This issue arises as we have taken the Markov assumption, for theoretical analysis, for determining the filter function at the next state i.e. the filter function depends on the current filter function. However in real world settings the assumption may not hold and empirically learning may happen from other signals(eg: the current graph input etc.) and the filter function may not depend on the past state entirely. This is where the LSTM(of the dynamic parameter module) may help if it's forget gate decides that the current input is the useful signal for learning the filter function and not the previous state, thus decoupling the learning of the filter at current time step from the errors of the previous ones and empirically attaining a bounded error with time.
Similarly, part (ii) states that under the given assumptions, the deviation between the approximated convolution supports at consecutive time steps is directly related to that between the desired supports at the respective time steps. 

\begin{lemma}
Let $G(\lambda)$ be the frequency response at frequency $\lambda$. Let $\lambda_1 \geq \lambda_2 \geq \dots \geq \lambda_n$ be the eigenvalues in descending order and $p_1, p_2, \dots p_n$ be he corresponding eigenvalues of the laplacian of the graph. Define $\lambda_{max}$ to be the eigenvalue at which $G(\lambda)$ is maximum. Let $C^{l}$ represent the convolution support($UG(\Lambda)U^T$) of the spectral filter at layer $l$. Then the factor by which the cosine similarity between consecutive layers dampens is $\underset{l \xrightarrow[]{} \infty}{lim} \frac{| cos(\left<C^{l+1}h, p_n \right>) |}{| cos(\left<C^{l}h, p_n \right> |} = \frac{G(\lambda_{n})}{G(\lambda_{max})}$.
\end{lemma}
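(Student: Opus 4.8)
The plan is to reduce everything to the spectral (eigenbasis) coordinates of the signal $h$ and then track the growth of each coordinate under repeated application of the filter. Reading $C^l$ as the $l$-fold application of the self-adjoint operator $C = UG(\Lambda)U^T$, we have $C^l = UG(\Lambda)^l U^T$, so each eigen-coordinate is simply scaled by $G(\lambda_k)^l$. First I would expand $h = \sum_k a_k p_k$ with $a_k = \langle h, p_k\rangle$ in the orthonormal eigenbasis, from which $C^l h = \sum_k G(\lambda_k)^l a_k\, p_k$ follows immediately.

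With this decomposition the two quantities entering the cosine similarity become explicit. By orthonormality $\langle C^l h, p_n\rangle = G(\lambda_n)^l a_n$, while $\norm{C^l h}^2 = \sum_k G(\lambda_k)^{2l} a_k^2$ and $\norm{p_n} = 1$. Interpreting $\cos(\langle C^l h, p_n\rangle)$ as the cosine of the angle between $C^l h$ and $p_n$, this gives $\cos(\theta_l) = G(\lambda_n)^l a_n / \norm{C^l h}$. Forming the ratio of consecutive layers, the common factor $a_n$ cancels and one is left with
\[
\frac{|\cos(\theta_{l+1})|}{|\cos(\theta_l)|} = |G(\lambda_n)| \cdot \frac{\norm{C^l h}}{\norm{C^{l+1} h}},
\]
so the whole problem collapses to evaluating the limiting ratio of consecutive filter-iterate norms.

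The key step is then the asymptotics of $\norm{C^{l+1}h}/\norm{C^l h}$. Factoring out the dominant scale $G(\lambda_{max})=:G_{max}$, I would write $\norm{C^l h}^2 = G_{max}^{2l}\sum_k (G(\lambda_k)/G_{max})^{2l} a_k^2$; since $|G(\lambda_k)/G_{max}| \le 1$ with equality only at the maximizer, every subdominant term decays geometrically and the sum converges to $a_{max}^2$ as $l\to\infty$. Hence $\norm{C^l h} \sim G_{max}^l |a_{max}|$ and $\norm{C^{l+1}h}/\norm{C^l h} \to G_{max}$, which substituted above yields the claimed limit $G(\lambda_n)/G(\lambda_{max})$.

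I expect the main obstacle to be the care needed in this last (dominant-eigenvalue) step: one must assume the maximizer of $|G|$ is unique and that $h$ has nonzero projection onto the corresponding eigenvector ($a_{max}\neq 0$), and one should phrase the maximum in terms of $|G(\lambda)|$ rather than $G(\lambda)$ so that the norm is genuinely controlled when $G$ can take negative values. Degenerate cases — ties in $|G(\lambda_{max})|$, or $a_{max}=0$ — would have to be handled or explicitly excluded, but under these generic nondegeneracy assumptions the geometric-decay argument is routine and the cancellation in the ratio does the rest.
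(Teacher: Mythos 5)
Your proof is correct and follows essentially the same route as the paper: both reduce the cosine similarity to the eigen-coordinate expression $\cos(\theta_l) = G(\lambda_n)^l a_n / \norm{C^l h}$, cancel the common factor in the consecutive-layer ratio, and evaluate the limit by factoring out the dominant scale $G(\lambda_{max})$ so that all subdominant terms vanish geometrically. The only differences are cosmetic: you derive the cosine identity directly from $C^l = U G(\Lambda)^l U^T$ where the paper imports it from \citet{yang2022new}, and you make explicit the non-degeneracy assumptions (unique maximizer of $|G|$ and nonzero projection $a_{max} \neq 0$) that the paper's limit step uses implicitly.
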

\begin{proof}
We begin by noting that the expression $cos(\left<C^{l} h, p_n \right>) = \frac{\alpha_n G^{l}(\lambda_n)}{\sqrt{\sum_{i=1}^{N} \alpha_i^2 G^{2l}(\lambda_i)}}$ as shown in \citep{yang2022new}, where $\alpha_i = \left<h, p_i \right>$.

Now, the ratio of these similarities between consecutive layers is,
\small{
\begin{align*}
    \frac{| cos(\left<C^{l+1}h, p_n \right>) |}{| cos(\left<C^{l}h, p_n \right> |} &= \frac{\alpha_n G^{l+1}(\lambda_n)}{\sqrt{\sum_{i=1}^{N} \alpha_i^2 G^{2(l+1)}(\lambda_i)}} \frac{\sqrt{\sum_{i=1}^{N} \alpha_i^2 G^{2l}(\lambda_i)}}{\alpha_n G^{l}(\lambda_n)} \\
    &= G(\lambda_n) \sqrt{\frac{\sum_{i=1}^{N} \alpha_i^2 G^{2l}(\lambda_i)}{\sum_{i=1}^{N} \alpha_i^2 G^{2(l+1)}(\lambda_i)}} \\
\end{align*}
}
In the limit of the layers tending to $\infty$,
\begin{align*}
    &\underset{l \xrightarrow[]{} \infty}{lim} \frac{| cos(\left<C^{l+1}h, p_n \right>) |}{| cos(\left<C^{l}h, p_n \right> |} = \underset{l \xrightarrow[]{} \infty}{lim} G(\lambda_n) \sqrt{\frac{\sum_{i=1}^{N} \alpha_i^2 G^{2l}(\lambda_i)}{\sum_{i=1}^{N} \alpha_i^2 G^{2(l+1)}(\lambda_i)}} \\
    &= G(\lambda_n) \sqrt{ \frac{G^{2l}(\lambda_{max}) \sum_{i=1}^{N} \alpha_i^2 \underset{l \xrightarrow[]{} \infty}{lim} \left( \frac{G^{2l}(\lambda_i)}{G^{2l}(\lambda_{max})}\right)}{\sum_{i=1}^{N} G^{2(l+1)}(\lambda_{max}) \alpha_i^2 \underset{l \xrightarrow[]{} \infty}{lim} \left( \frac{G^{2(l+1)}(\lambda_i)}{G^{2(l+1)}(\lambda_{max})} \right) }} \\
    &= G(\lambda_n) \sqrt{ \frac{\alpha_{max}^2 G^{2l}(\lambda_{max})}{\alpha_{max}^2 G^{2(l+1)}(\lambda_{max})} } \\
    &= \frac{G(\lambda_n)}{G(\lambda_{max})} \\
\end{align*}
\end{proof}

\subsection{Discussion on computational complexity of Spectral Module}
Learning the filter coefficients $f_c^s$ at a given timestep, we can obtain the wavelet operator $g(L)$ at scale $s=1$ using equation $g(s_j x) = \sum_{k=0}^{\infty} c_{j,k} \overline{T_k}(x)$.
The cost for computing the terms $T_k(L) f$ for a sparse graph would be $\mathcal{O}(|E|)$. There would be $M$ such terms in the polynomial function. Also these terms could be reused across different scales. Also the computation of the weighted sum of these terms by the coefficients at scale $s_j$ would incur a complexity of $\mathcal{O}(N \times M_j)$. Thus for $J$ scales the total computational complexity of approximating the wavelet coefficients would be $\mathcal{O}(|E| + N\sum_{j=0}^{J} M_j)$. If the graph is of a bounded degree($d_G$) as is the case with most real world graphs this cost would be reduced to $\mathcal{O}(d_G N + N\sum_{j=0}^{J} M_j)$, at a given snapshot of the graph at time $t$.

\end{document}